\newtheorem{definition}{Definition}
\newtheorem{theorem}{Theorem}
\newcommand\shrink[1]{}
\def\n(#1){\bar{#1}}
\def\pr{{\it Pr}}
\def\X{{\bf X}}
\def\x{{\bf x}}
\def\Y{{\bf Y}}
\def\y{{\bf y}}
\def\Z{{\bf Z}}
\def\z{{\bf z}}
\def\eql(#1,#2){{#1\!\!=\!#2}}
\def\eql(#1,#2){{#1\!=\!#2}}
\newcommand\name[1]{\ensuremath{\mathsf{#1}}}
\def\eproof{{\hfill$\Box$}}
\newenvironment{proof}[1][Proof]
			   {\begin{trivlist}\item[\hskip \labelsep {\bfseries #1}]}
			   {~\eproof \end{trivlist}}
\newenvironment{proofendline}[1][Proof]
               {\begin{trivlist}\item[\hskip \labelsep {\bfseries #1}]}
               {\end{trivlist}}
\def\clap#1{\hbox to 0pt{\hss#1\hss}}
\newcommand\Amain[1]{
\textbf{main:}
#1
\vspace{1mm}
}
\newcommand\Ainput[1]{
\vspace{1mm}
\textbf{input:} #1
}
\newcommand\Aoutput[1]{
\vspace{1mm}
\textbf{output:} #1
\vspace{1mm}
}
\def\mincard{\name{minimize}}
\def\conjoin{\name{conjoin}}
\def\posve{+}
\def\negve{-}
\title{A Symbolic Approach to Explaining Bayesian Network Classifiers}
\author{
Andy Shih \and
Arthur Choi \and
Adnan Darwiche
\\ 
Computer Science Department \\
University of California, Los Angeles\\
\texttt{\{andyshih,aychoi,darwiche\}@cs.ucla.edu}
}
\begin{document}

\maketitle

\begin{abstract}
We propose an approach for explaining Bayesian network classifiers, which is based on compiling such classifiers into decision functions that have a tractable and symbolic form. We introduce two types of explanations for why a classifier may have classified an instance positively or negatively and suggest algorithms for computing these explanations. The first type of explanation identifies a minimal set of the currently active features that is responsible for the current classification, while the second type of explanation identifies a minimal set of features whose current state (active or not) is sufficient for the classification. We consider in particular the compilation of Naive and Latent-Tree Bayesian network classifiers into Ordered Decision Diagrams (ODDs), providing a context for evaluating our proposal using case studies and experiments based on classifiers from the literature.
\end{abstract}

\section{Introduction}

Recent progress in artificial intelligence and the increased deployment of AI systems have led to highlighting the need for \emph{explaining} the decisions made by such systems, particularly classifiers; see, e.g., \cite{lime:kdd16,ElenbergDFK17,LundbergL17,anchors:aaai18}.\footnote{It is now recognized that
opacity, or lack of explainability is ``one of the biggest obstacles to widespread adoption of artificial intelligence'' (The Wall Street Journal, August 10, 2017).}
For example, one may want to explain {\em why} a classifier decided to turn down a loan application, or rejected an applicant for an academic program,
or recommended surgery for a patient. Answering such {\em why?} questions is particularly central to assigning blame and responsibility, 
which lies at the heart of legal systems and may be required in certain contexts.\footnote{See, for example, the EU general data protection regulation, which has a provision relating to 
explainability, \url{https://www.privacy-regulation.eu/en/r71.htm}.}

In this paper, we propose a \emph{symbolic} approach to explaining Bayesian network classifiers, which is based on the following observation.
Consider a classifier that labels a given instance either positively or negatively based on a number of discrete features. Regardless of how this classifier is implemented,
e.g., using a Bayesian network, it does specify a symbolic function that maps features into a yes/no decision (yes for a positive instance).
We refer to this function as the classifier's \emph{decision function} since it unambiguously describes the classifier's behavior, independently of how the classifier
is implemented.  Our goal is then to obtain a symbolic and tractable representation of this decision function, to enable symbolic and efficient reasoning about its
behavior, including the generation of explanations for its decisions. In fact, \cite{chanUAI03} showed how to compile the decision functions of naive Bayes classifiers into a specific symbolic and tractable representation, known as Ordered Decision Diagrams (ODDs). This representation extends Ordered Binary Decision Diagrams (OBDDs) to use multi-valued variables (discrete features), 
while maintaining the tractability and properties of OBDD~\cite{Bryant86,MeinelT98,Wegener00}.

We show in this paper how compiling decision functions into ODDs can facilitate the efficient explanation of classifiers
and propose two types of explanations for this purpose.

The first class of explanations we consider are \emph{minimum-cardinality explanations}.  To motivate these explanations, consider a classifier that has diagnosed a patient with some disease based on some observed test results, some of which were positive and others negative.  Some of the positive test results may not be necessary for the classifier's decision: the decision would remain intact if these test results were negative.
A minimum-cardinality explanation then tells us which of the positive test results are the culprits for the classifier's decision, i.e., a minimal subset of the positive test results that is sufficient for the current decision.

The second class of explanations we consider are \emph{prime-implicant explanations}. These explanations answer the following question:
what is the smallest subset of features that renders the remaining features irrelevant to the current decision?  
In other words, which subset of features---when fixed---would allow us to arbitrarily toggle the values of other features, while maintaining the classifier's decision?

This paper is structured as follows.
In Section~\ref{sec:classifiers}, we review the compilation of naive Bayes classifiers into ODDs, and propose a new algorithm for compiling latent-tree classifiers into ODDs.  
In Section~\ref{sec:mincard}, we introduce minimum-cardinality explanations, propose an algorithm for computing them, and provide a case study on a real-world classifier.  
In Section~\ref{sec:primes}, we do the same for prime-implicant explanations.  In Section~\ref{sec:more-monotone}, we discuss the relationship between the two types of explanations and 
show that they coincide for monotone classifiers.
We then follow by a discussion of related work in Section~\ref{sec:related} and finally close in Section~\ref{sec:conclusion}. 

\section{Compiling Bayesian Network Classifiers} \label{sec:classifiers}

\begin{figure}[t]
  \centering
  \includegraphics[width=.8\linewidth,clip=true,angle=0]{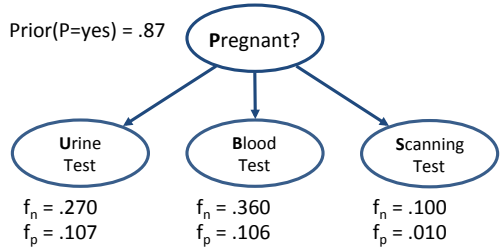}
  \caption{A naive Bayes classifier, specified using the class prior, in addition to the false positive (\(f_p\)) and false negative (\(f_n\)) rates of features.
  The class variable and features are all binary.}
  \label{fig:nbayes}
\end{figure}

\begin{figure}[t]
  \centering
  \includegraphics[width=.4\linewidth,clip=true,angle=0]{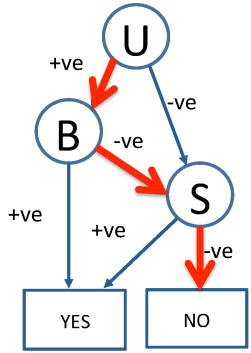}
  \caption{An OBDD (decision function) of the classifier in Figure~\ref{fig:nbayes}.}
  \label{fig:obdd}
\end{figure}

Consider Figure~\ref{fig:nbayes} which depicts a naive Bayes
classifier for detecting pregnancy.  Given results for the three tests, if the probability of
pregnancy passes a given threshold (say \(90\%\)), we would then
obtain a ``yes'' decision on pregnancy. 

Figure~\ref{fig:obdd} depicts the decision function of this classifier,
in the form of an Ordered Binary Decision Diagram (OBDD).
Given some test results, we make a corresponding decision on 
pregnancy by simply navigating the OBDD.  We start at the root, which
is labeled with the Urine (U) test.  Depending on the outcome of this test,
we follow the edge labeled positive, or the edge labeled negative.  We
repeat for the test labeled at the next node.  Eventually, we reach a
leaf node labeled ``yes'' or ``no,'' which provides the resulting
classification. 

The decisions rendered by this OBDD are guaranteed 
to match those obtained from the naive Bayes classifier. 
We have thus converted a probabilistic classifier into an equivalent classifier 
that is symbolic and tractable. 
We will later see how this facilitates the efficient generation of explanations.

We will later discuss compiling Bayesian network classifiers into ODDs,
after formally treating classifiers and ODDs.

\subsection{Bayesian Network Classifiers}

A {\em Bayesian network classifier} is a Bayesian network containing a special set of variables:
a single {\em class} variable \(C\) and \(n\) {\em feature} variables \(\X = \{X_1, \ldots, X_n\}\). 
The class \(C\) is usually a root in the network and the features \(\X\) are usually leaves.
In this paper, we assume that the class variable is binary, with two values \(c\) and \(\bar{c}\)
that correspond to positive and negative classes, respectively (i.e., ``yes'' and ``no'' decisions).
An instantiation of variables \(\X\) is denoted \(\x\) and called an {\em instance.}
A Bayesian network classifier specifying probability distribution \(\pr(.)\) will classify
an instance \(\x\) positively iff \(\pr(c \mid \x) \geq T\), where \(T\) is called the classification {\em threshold.}

\begin{definition}[Decision Function]
Suppose that we have a Bayesian network classifier with features
\(\X\), class variable \(C\) and a threshold \(T\).  Let \(f(\X)\) be a function that maps
instances \(\x\) into \(\{0,1\}\). We say that \(f(\X)\) is the classifier's \underline{decision function} iff
\[
f(\x) =
\left\{ 
\begin{tabular}{cl}
\(1\) & if \(\pr(c \mid \x) \geq T\) \\
\(0\) & otherwise.
\end{tabular}
\right.
\]
Instance \(\x\) is \underline{positive} if \(f(\x)=1\) and \underline{negative} if \(f(\x)=0\).
\end{definition}

The {\em naive Bayes classifier} is a special type of a Bayesian network classifier,
where edges extend from the class to features (no other nodes or edges).
Figure~\ref{fig:nbayes} depicted a naive Bayes classifier.
A {\em latent-tree classifier} is a tree-structured Bayesian network, whose root is the class variable and whose leaves are the features.

\subsection{Monotone Classifiers}

The class of {\em monotone} classifiers is relevant to our discussion, particularly when relating the two types of explanations we shall propose.
We will define these classifiers next, while assuming binary features to simplify the treatment. 
Intuitively, a monotone classifier satisfies the following. A positive instance remains positive if we flip some of its features 
from \(0\) to \(1\). Moreover, a negative instance remains negative if we flip some of its features from \(1\) to \(0\). 

More formally, consider two instances \(\x^\star\) and \(\x\). 
We write \(\x^\star \subseteq^1 \x\) to mean: the features set to \(1\) in \(\x^\star\) is a subset of those set to \(1\) in \(\x\).
Monotone classifiers are then characterized by the following property of their decision functions, which is well-known in the literature on Boolean functions.

\begin{definition}
A decision function \(f(\X)\) is monotone iff 
\[
\x^\star \subseteq^1 \x \mbox{\quad only if \quad} f(\x^\star) \le f(\x).
\]
\end{definition}
One way to read the above formal definition is as follows.
If the positive features in instance \(\x\) contain those in instance \(\x^\star\), then instance \(\x\) must be positive if instance \(\x^\star\) is positive. 

It is generally difficult to decide whether a Bayesian network classifier is monotone; see, e.g.,~\cite{GaagBF04}. However, if the decision function
of the classifier is an OBDD, then monotonicity can be decided in time quadratic in the OBDD size~\cite{HoriyamaI02}. 

\subsection{Ordered Decision Diagrams}

An Ordered Binary Decision Diagram (OBDD) is based on an
ordered set of binary variables \(\X = X_1,\ldots,X_n\). It is a rooted, directed acyclic graph, with
two sinks called the \(1\)-sink and \(0\)-sink.  Every node (except the sinks)
in the OBDD is labeled with a variable \(X_i\) with two outgoing
edges, one labeled \(1\) and the other labeled \(0\).  If
there is an edge from a node labeled \(X_i\) to a node labeled
\(X_j\), then \(i < j\). An OBDD is defined over binary variables,
but can be extended to discrete variables with arbitrary values. This is called an ODD: a node labeled
with variable \(X_i\) has one outgoing edge for each value of variable \(X_i\).
Hence, an OBDD/ODD can be viewed as representing a function \(f(\X)\) that maps instances
\(\x\) into \(\{0,1\}\). Figure~\ref{fig:obdd} depicted an OBDD.
Note: in this paper, we use positive/yes/\(1\) and negative/no/\(0\) interchangeably.

An OBDD is a {\em tractable} representation of a function \(f(\X)\) as it can be used to
efficiently answer many queries about the function. For example, one can in linear time count
the number of positive instances \(\x\) (i.e., \(f(\x)=1\)), called the {\em models} of \(f\). 
One can also conjoin, disjoin and complement OBDDs efficiently. 
This tractability, which carries over to ODDs, will be critical for efficiently generating explanations. 
For more on OBDDs, see \cite{MeinelT98,Wegener00}.

\subsection{Compiling Decision Functions}

\def\extendtree{\texttt{\mbox{expand-then-merge}}}
\def\pickbest{\texttt{\mbox{pick-best-subtree}}}

\cite{chanUAI03} proposed an algorithm for compiling a naive Bayes classifier into an ODD, while guaranteeing
an upper bound on the time of compilation and the size of the resulting ODD. In particular, for a classifier with \(n\) 
features, the compiled ODD has a number of nodes that is bounded by
\(O(b^{\frac{n}{2}})\) and can be obtained in time \(O(n b^{\frac{n}{2}})\). Here, \(b\) is the maximum number
of values that a variable may have. The actual time and space complexity can be much less,
depending on the classifier's parameters and variable order used for the ODD (as observed experimentally).

The algorithm is based on the following insights. Let \(\X\) be all features.
Observing features \(\Y \subset \X\) leads to another naive Bayes classifier, with features \(\X \setminus \Y\) and an adjusted class prior. 
Consider now a decision tree over features \(\X\) and a node in the tree that was reached by a partial instantiation~\(\y\).  
We annotate this node with the corresponding naive Bayes classifier \(N_\y\) found by observing \(\y\),
and then merge nodes with equivalent classifiers---those having equivalent decision functions---as described by \cite{chanUAI03}.  
Implementing this idea carefully leads to an ordered decision diagram (ODD) with the corresponding bounds.\footnote{\cite{chanUAI03}
uses a sophisticated, but conceptually simple, technique for identifying equivalent classifiers.}

\begin{algorithm}[tb]
\caption{\texttt{compile-naive-bayes}(\(N\))} \label{alg:compile-nb}

\Ainput{A naive Bayes classifier \(N\)}

\Aoutput{An ODD for the decision function of \(N\)}

\Amain{
\begin{algorithmic}[1]
\STATE \(D \leftarrow\) empty decision graph
\FOR{each feature \(X\) of classifier \(N\)}
	\STATE \(D \leftarrow \extendtree(N,D,X)\)
\ENDFOR
\RETURN ODD \(D\)
\end{algorithmic}
}
\end{algorithm}

\begin{algorithm}[tb]
\caption{\texttt{compile-latent-tree}(\(N\)) \label{alg:compile-lt}}

\Ainput{A latent-tree classifier \(N\)}

\Aoutput{An ODD for the decision function of \(N\)}

\Amain{
\begin{algorithmic}[1]
\STATE \(D \leftarrow\) empty decision graph
\STATE \(R \leftarrow\) root of tree \(N\)
\WHILE{\(R\) has unprocessed children}
\IF{\(R\) has a single internal and unprocessed child \(C\)}
\STATE \(R \leftarrow C\)
\ELSE
  \STATE \(C \leftarrow\) child of \(R\) with smallest number of leaves
  \FOR{each leaf \(X\) under \(C\)}
	\STATE \(D \leftarrow \extendtree(N,D,X)\)
  \ENDFOR
  \STATE mark \(C\) as processed
\ENDIF
\ENDWHILE
\RETURN ODD \(D\) 
\end{algorithmic}
}
\end{algorithm}

Algorithm~\ref{alg:compile-nb} is a simpler variation on the algorithm of \cite{chanUAI03}; it has the same complexity bounds, but may be less efficient in practice. 
It uses procedure \(\extendtree(.,D,X)\), which expands the partial decision graph \(D\) by a feature \(X\), then merges nodes that correspond
to equivalent classifiers.

Using this procedure, we propose Algorithm~\ref{alg:compile-lt} for compiling a latent-tree classifier into an ODD. Here's the key insight. 
Let \(R\) be a node in a latent-tree classifier where all features outside \(R\) have been observed, and let \(C\) be a child of \(R\).
Observing all features under \(C\) leads to a new latent-tree classifier  
without the subtree rooted at \(C\) and an adjusted class prior. Algorithm~\ref{alg:compile-lt} uses this observation by iteratively
choosing a node \(C\) and then shrinking the classifier size by instantiating the features under \(C\), allowing us to compile an ODD in a fashion similar to \cite{chanUAI03}.
The specific choice of internal nodes \(C\) by Algorithm~\ref{alg:compile-lt} leads to the following complexity.

\begin{theorem}\label{theorem:tree}
Given a latent-tree classifier \(N\) with \(n\) variables, each with at most \(b\) values, 
the ODD computed by Algorithm~\ref{alg:compile-lt} has size \(O(b^{\frac{3n}{4}})\) and can be obtained in time \(O(nb^{\frac{3n}{4}})\).
\end{theorem}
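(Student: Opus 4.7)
The plan is to adapt the layered analysis of \cite{chanUAI03} for naive Bayes compilation to the tree-structured case, keeping careful track of the constraints the ``pick-smallest-subtree'' rule imposes on the order in which features are processed. I view the ODD constructed by Algorithm~\ref{alg:compile-lt} as having layers $L_0, L_1, \ldots, L_n$, where $L_k$ collects the nodes reached after \extendtree\ has been invoked on the first $k$ features in the algorithm's order. Two partial instantiations $\y$ and $\y'$ of those $k$ features coincide in $L_k$ exactly when the residual latent-tree classifiers $N_\y$ and $N_{\y'}$ induce the same decision function on the remaining $n-k$ features. The total ODD size is $\sum_{k=0}^{n} |L_k|$, so the task reduces to bounding each $|L_k|$.

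I would then derive two complementary bounds on $|L_k|$: the counting bound $|L_k|\le b^k$ coming from the number of distinct partial instantiations, and a structural bound obtained from the key insight noted before the theorem statement---once all leaves under a subtree are processed, that subtree is absorbed into an adjusted class prior, so the residual is again a strictly smaller latent-tree classifier on the yet-unprocessed features. The decisive feature of Algorithm~\ref{alg:compile-lt} is that whenever the current root has two or more unprocessed children, it picks the child $C$ with the smallest number of leaves, so $C$ has at most half the leaves currently under that root. In particular the first full subtree the algorithm finishes has $n_1 \le n/2$ leaves, and layers $L_0,\ldots,L_{n_1}$ contribute $\sum_{k=0}^{n_1} b^k = O(b^{n_1}) = O(b^{n/2})$ by the counting bound. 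For the later layers, the continuation of the ODD begins from at most $b^{n_1}$ distinct residual classifiers on a latent tree with $n-n_1$ features, and I plan to argue a naive-Bayes-style two-sided minimum $\min(b^{k'}, b^{(n-n_1)-k'})$ on the within-continuation layer width (with $k'=k-n_1$), widened by the factor $b^{n_1}$ accounting for the initial branching. Summing these pieces gives $|D| = O(b^{n_1} + b^{n_1 + (n-n_1)/2}) = O(b^{(n+n_1)/2}) \le O(b^{3n/4})$, where the exponent $3n/4$ arises exactly because $n_1 \le n/2$.

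The step I expect to be the main obstacle is the rigorous justification of the $b^{n_1}\cdot b^{(n-n_1)/2}$ bound on the continuation, since the $b^{n_1}$ residual classifiers sitting at layer $L_{n_1}$ do not merge automatically. The argument needs to exploit the fact that these residuals share the same latent-tree backbone---they differ only by the adjusted prior inherited from the instantiation of the first subtree---so that the merging of equivalent nodes during subsequent \extendtree\ calls proceeds in parallel across all $b^{n_1}$ branches and within each branch obeys the naive-Bayes-style geometric decay. A secondary technicality is handling the descent step (when $R$ has a single internal unprocessed child), but this operation adds no layers and can be folded into the recursive accounting. Once the ODD-size bound is in place, the time bound $O(n\, b^{3n/4})$ follows immediately because Algorithm~\ref{alg:compile-lt} invokes \extendtree\ once per feature and each invocation runs in time linear in the current size of the decision graph.
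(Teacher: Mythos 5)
Your overall framing---a path-counting bound $b^k$ on each layer, a merging bound coming from the fact that fully instantiating a subtree leaves a smaller latent-tree classifier with only an adjusted class prior, and the pick-smallest-subtree rule controlling the exponent---is the right set of ingredients and matches the spirit of the paper's proof. But the step you yourself flag as the main obstacle is a genuine gap, and the proposed fix does not close it. Your continuation bound $b^{n_1}\cdot\min\bigl(b^{k'},b^{(n-n_1)-k'}\bigr)$ asserts naive-Bayes-style geometric decay of the layer width throughout the continuation. The merging bound $b^{(n-n_1)-k'}$ is only valid at \emph{subtree boundaries}, i.e., at depths where every previously chosen subtree has been fully instantiated: only there is the residual classifier parameterized by a single adjusted prior, so that at most (number of completions)${}+1$ distinct decision functions arise. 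In the \emph{middle} of processing a subtree $C$, the residual depends on the partial evidence under $C$'s latent variable---a $b$-valued message, not a scalar---so no merging is guaranteed and the width can climb to (boundary width)${}\times b^{|C|}$. That is exactly where the $3n/4$ exponent comes from, and exactly where your formula undercounts. Concretely, take a root with children of sizes $n/4$ and $3n/4$, the latter with children of sizes $n/4$ and $n/2$, and the last with two children of size $n/4$. Then $n_1=n/4$, there is a boundary at depth $n/2$ whose width can be the full $\min(b^{n/2},b^{n/2})=b^{n/2}$, and the next processed subtree has $n/4$ leaves, so the only available bound on the width at depth $3n/4$ is $b^{n/2}\cdot b^{n/4}=b^{3n/4}$; your formula at $k'=n/2$ gives only $b^{n/4}\cdot\min(b^{n/2},b^{n/4})=b^{n/2}$. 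Your final estimate $O(b^{(n+n_1)/2})$ would thus ``improve'' on $b^{3n/4}$ whenever $n_1<n/2$, which cannot be justified.

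The related structural problem is that you invoke the smallest-subtree rule only for the \emph{first} subtree, whereas it is needed at \emph{every} boundary. The paper's proof maintains a per-iteration invariant: at each boundary of depth $i$ the width is at most $\min(b^i,b^{n-i})$; the descent step plus the smallest-child rule guarantee the next processed subtree has at most $(n-i)/2$ leaves; hence the expanded width never exceeds $\min(b^i,b^{n-i})\cdot b^{(n-i)/2}=\min\bigl(b^{(n+i)/2},b^{3(n-i)/2}\bigr)\le b^{3n/4}$, by the two cases $i\le n/2$ and $i>n/2$. No merging inside a subtree is assumed, so there is nothing to justify about ``parallel merging across branches.'' To repair your argument, replace the first-subtree/continuation decomposition with this uniform per-boundary invariant. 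You will also need a separate small argument (as the paper gives) to bring the total size from $O(nb^{3n/4})$ down to $O(b^{3n/4})$: summing per-layer bounds over all $n$ layers only yields the former, and since the true width profile is not unimodal you must show that at most one expansion phase can be near the peak.
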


If one makes further assumptions about the structure of the latent tree (e.g., if the root has \(k\) children, and each child of the root has \(O(\frac{n}{k})\) features), 
then one obtains the size bound of \(O(b^\frac{n}{2})\) and time bound of \(O(n b^\frac{n}{2})\) for naive Bayes classifiers.
We do not expect a significantly better upper bound on the time complexity due to the following result.

\begin{theorem}\label{theorem:hardness}
Given a naive Bayes classifier \(N\), compiling an ODD representing its decision function is NP-hard.
\end{theorem}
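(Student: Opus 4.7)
The plan is to establish NP-hardness by a Turing reduction that exploits the gap between the tractability of queries on ODDs and their apparent intractability on NB classifiers directly.

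First I would recast the decision function of a binary-feature NB classifier in terms of log-odds: the condition \(\pr(c\mid\x) \geq T\) is equivalent, after taking logarithms, to a linear threshold inequality \(\sum_i w_i x_i + w_0 \geq W\), with each \(w_i\) determined by the log-odds ratio of feature \(X_i\)'s CPT and \(W\) determined by the prior and the threshold \(T\). Conversely, any target set of log-odds ratios can be realised by an appropriate choice of CPT values, so NB decision functions coincide with the class of Boolean linear threshold functions.

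Next I would invoke the standard fact that ODDs admit polynomial-time model counting via a bottom-up dynamic program over the DAG. A polynomial-time NB-to-ODD compiler would therefore yield a polynomial-time algorithm for counting the models of an arbitrary linear threshold function---the \#KNAPSACK problem, which is \#P-complete. For a decision-flavoured reduction I would take a SUBSET-SUM instance \((a_1,\ldots,a_n,T)\), build two NB classifiers whose decision functions are \([\sum_i a_i x_i \geq T]\) and \([\sum_i a_i x_i \geq T{+}1]\), compile each to an ODD, and compare their model counts: the instance is satisfiable iff the counts differ. Because \#P-hardness under polynomial-time Turing reductions implies NP-hardness, this would establish the theorem.

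The main technical obstacle is the encoding step: constructing NB classifiers whose decision functions realise an integer-weighted threshold function \(\sum_i a_i x_i \geq T\) with CPT entries of polynomial bit-length. Directly setting the odds ratio of \(X_i\) to \(2^{a_i}\) blows up exponentially in \(|a_i|\) and is therefore unusable. I would address this either by working with a \#P-hard family of threshold instances whose weights are already compactly representable in the classifier, or by a careful rescaling argument that exploits the freedom in choosing the prior and the threshold \(T\) jointly so as to preserve the threshold condition while keeping every rational parameter of polynomial size. The remaining ingredients---the threshold-function characterisation of NB decision functions, the ODD model-counting dynamic program, and the Turing-reduction template---are routine.
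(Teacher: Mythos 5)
Your proposal is essentially the paper's own argument: both encode an integer linear-threshold function into the CPTs of a naive Bayes classifier via log-odds ratios, reduce from a subset-sum-type problem (the paper uses number partitioning), and exploit linear-time model counting on the compiled OBDD to complete a Turing reduction. The paper avoids your two-threshold comparison by using the symmetry of the partition problem---complementary instances $\x_I$ and $\x_J$ have negated log-odds, so a solution exists iff the count of positive instances under a strict threshold of $1/2$ falls below $2^{n-1}$---and it simply sets $\log\bigl(\pr(x_i \mid c)/\pr(x_i \mid \bar{c})\bigr) = a_i$ without addressing the parameter bit-length concern you rightly flag, which applies to its construction as much as to yours.
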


\section{Minimum Cardinality Explanations} \label{sec:mincard}

We now consider the first type of explanations for why a classifier makes a certain decision. 
These are called {\em minimum-cardinality explanations} or MC-explanations.
We will first assume that the features are binary and then generalize later.

Consider two instances \(\x^\star\) and \(\x\). 
As we did earlier, we write \(\x^\star \subseteq^1 \x\) to mean: the features set to \(1\) in \(\x^\star\) are a subset of those set to \(1\) in \(\x\).
We define \(\x^\star \subseteq^0 \x\) analogously.
Moreover, we write \(\x \leq^1 \x^\star\) to mean: the count of \(1\)-features in \(\x\) is no greater than their count in \(\x^\star\). 
We define \(\x \leq^0 \x^\star\) analogously.

\begin{definition}[MC-Explanation]\label{def:mc-explanation}
Let \(f(\X)\) be a given decision function. An \underline{MC-explanation} of a \underline{positive} instance \(\x\) is another positive instance \(\x^\star\) 
such that \(\x^\star \subseteq^1 \x\) and there is no other positive instance \(\x^\prime \subseteq^1 \x\) where \(\x^\prime <^1 \x^\star\).
An \underline{MC-explanation} of a \underline{negative} instance \(\x\) is another negative instance \(\x^\star\) 
such that \(\x^\star \subseteq^0 \x\) and there is no other negative instance \(\x^\prime \subseteq^0 \x\) where \(\x^\prime <^0 \x^\star\).
\end{definition}
Intuitively, an MC-explanation of a positive decision \(f(\x)=1\) answers the question: 
which positive features of instance \(\x\) are responsible for this decision? 
Similarly for the MC-explanation of a negative decision \(f(\x)=0\): 
which negative features of instance \(\x\) are responsible for this decision?
MC-explanations are not necessarily unique as we shall see later. However, MC-explanations of positive decisions must all have the same number of \(1\)-features,
and those for negative decisions must all have the same number of \(0\)-features.

MC-explanations are perhaps best illustrated using a monotone classifier. 
As a running example, consider a (monotone) classifier for deciding whether 
a student will be admitted to a university. The class variable is \name{admit} ($A$) and the features of an applicant are:
\begin{itemize}
\item \name{work\mbox{-}experience} ($W$): has prior work experience.
\item \name{first\mbox{-}time\mbox{-}applicant} ($F$): did not apply before.
\item \name{entrance\mbox{-}exam} ($E$): passed the entrance exam.
\item \name{gpa} ($G$): has met the university's expected GPA.
\end{itemize}
All variables are either positive (\posve) or negative (\negve).

Consider a naive Bayes classifier with the following false positive and false negative rates:
\begin{center}
\begin{tabular} {c|cc}
feature & \(f_p\) & \(f_n\) \\ \hline
$W$ & \(0.10\) & \(0.04\) \\
$F$ & \(0.20\) & \(0.30\) \\
$E$ & \(0.15\) & \(0.60\) \\
$G$ & \(0.11\) & \(0.03\)
\end{tabular}
\end{center}
To completely specify the naive Bayes classifier, we also need the prior probability
of admission, which we assume to be \(\pr(\eql(A,\posve)) = 0.30\). Moreover, we
use a decision threshold of \(0.50\), admitting an applicant \(\x\) if \(\pr(\eql(A,\posve)\mid\x) \geq .50\).  Note that with the above false positive and false negative rates, a positively observed feature will increase the probability of a positive classification, while a negatively observed feature will increase the probability of a negative classification (hence, the classifier is monotone).

\begin{table}[tb]
\begin{center}
\small
\setlength{\tabcolsep}{3pt}
\begin{tabular}{cccc|c|c|c}
$W$ & $F$ & $E$ & $G$ & \(\pr(\eql(A,\posve) | \x)\) & \(f(\x)\) & MC-explanations\\\hline
\negve & \negve & \negve & \negve & 0.0002 & \negve & \parbox{3.2cm}{\centering (\negve\ \negve\ \posve\ \posve) (\negve\ \posve\ \negve\ \posve) (\negve\ \posve\ \posve\ \negve)\\ (\posve\ \negve\ \posve\ \negve)  (\posve\ \posve\ \negve\ \negve)} \\
\negve & \negve & \negve & \posve & 0.0426 & \negve & (\negve\ \negve\ \posve\ \posve) (\negve\ \posve\ \negve\ \posve) \\
\negve & \negve & \posve & \negve & 0.0006 & \negve & (\negve\ \negve\ \posve\ \posve) (\negve\ \posve\ \posve\ \negve) (\posve\ \negve\ \posve\ \negve)\\
\negve & \negve & \posve & \posve & 0.1438 & \negve & (\negve\ \negve\ \posve\ \posve) \\
\negve & \posve & \negve & \negve & 0.0016 & \negve & (\negve\ \posve\ \negve\ \posve) (\negve\ \posve\ \posve\ \negve) (\posve\ \posve\ \negve\ \negve) \\
\negve & \posve & \negve & \posve & 0.2933 & \negve & (\negve\ \posve\ \negve\ \posve) \\
\negve & \posve & \posve & \negve & 0.0060 & \negve & (\negve\ \posve\ \posve\ \negve) \\
\negve & \posve & \posve & \posve & 0.6105 & \color{red}\textbf{\posve} & \color{red}\textbf{(\negve\ \posve\ \posve\ \posve)}\\
\posve & \negve & \negve & \negve & 0.0354 & \negve & (\posve\ \posve\ \negve\ \negve) (\posve\ \negve\ \posve\ \negve) \\
\posve & \negve & \negve & \posve & 0.9057 & \color{red}\textbf{\posve} & \color{red}\textbf{(\posve\ \negve\ \negve\ \posve)}\\
\posve & \negve & \posve & \negve & 0.1218 & \negve & (\posve\ \negve\ \posve\ \negve) \\
\posve & \negve & \posve & \posve & 0.9732 & \color{red}\textbf{\posve} & \color{red}\textbf{(\posve\ \negve\ \negve\ \posve)}\\
\posve & \posve & \negve & \negve & 0.2552 & \negve & (\posve\ \posve\ \negve\ \negve) \\
\posve & \posve & \negve & \posve & 0.9890 & \color{red}\textbf{\posve} & \color{red}\textbf{(\posve\ \negve\ \negve\ \posve)}\\
\posve & \posve & \posve & \negve & 0.5642 & \color{red}\textbf{\posve} & \color{red}\textbf{(\posve\ \posve\ \posve\ \negve)}\\
\posve & \posve & \posve & \posve & 0.9971 & \color{red}\textbf{\posve} & \color{red}\textbf{(\posve\ \negve\ \negve\ \posve)}\\
\end{tabular}
\end{center}
\caption{A decision function with MC-explanations. \label{tab:decision-fn}}
\end{table}

Table~\ref{tab:decision-fn} depicts the decision function \(f\) for this naive Bayes classifier,
with MC-explanations for all \(16\) instances.

Consider, for example, a student (\posve\ \posve\ \posve\ \posve) who was admitted by this decision function. 
There is a single MC-explanation for this decision, (\posve\ \negve\ \negve\ \posve), with cardinality \(2\). According to this explanation, 
work experience and a good GPA were the reasons for admission. That is, the student would still have been admitted
even if they have applied before and did not pass the entrance exam.

For another example, consider a student (\negve\ \negve\ \negve\ \posve) who was rejected. 
There are two MC-explanations for this decision. The first, (\negve\ \negve\ \posve\ \posve), says that the student would not have been
admitted, even if they passed the entrance exam. The second explanation, (\negve\ \posve\ \negve\ \posve), says that the student would not have been admitted,
even if they were a first-time applicant.

Finally, we remark that while MC-explanations are more intuitive for monotone classifiers, they also apply to classifiers that are not monotone, as we shall see in Section~\ref{sec:case-study}. 

\subsection{Computing MC-Explanations}

We will now present an efficient algorithm for computing the MC-explanations of a decision, assuming that the decision function
has a specific form. Our treatment assumes that the decision function is represented as an OBDD, but it actually applies to a broader class of
representations which includes OBDDs as a special case. More on this later.

Our algorithm uses a key operation on decision functions. 

\begin{definition}[Cardinality Minimization]
For \(i \in \{0,1\}\), the \underline{\(i\)-minimization} of decision function \(f(\X)\) is 
another decision function \(f^i(\X)\) defined as follows:
\(f^i(\x)=1\) iff 
(a)~\(f(\x)=1\) and 
(b)~\(\x \leq^i \x^\star\) for every \(f(\x^\star)=1\).
\end{definition}
The \(1\)-minimization of decision function \(f\) renders positive decisions only on the positive instances of \(f\) having a minimal number of \(1\)-features.
Similarly, the \(0\)-minimization of decision function \(f\) renders positive decisions only on the positive instances of \(f\) having a minimal number of \(0\)-features.
Cardinality minimization was discussed and employed for other purposes in \cite{darwicheJACM-DNNF,ChoiKisaDarwiche13}.

\begin{algorithm}[tb]
\caption{\texttt{find-mc-explanation}(\(f(\X),\x\))} \label{alg:find-mc}

\Ainput{An OBDD \(f(\X)\) and instance \(\x\).}

\Aoutput{An OBDD \(g(\X)\) where \(g(\x^\star)=1\) iff \(\x^\star\) is an MC-explanation of decision \(f(\x)\).}

\Amain{
\begin{algorithmic}[1]
\STATE \(i \leftarrow f(\x)\)
\STATE \(\alpha \leftarrow \mbox{the subset of \(\x\) with variables set to \(1-i\)}\)
\STATE complement function \(f\) if \(i=0\)
\RETURN \(i\)-\(\mincard(\conjoin(f,\alpha))\)
\end{algorithmic}
}
\end{algorithm}

Algorithm~\ref{alg:find-mc} computes the MC-explanations of a decision \(f(\x)\).
The set of computed explanations is encoded by another decision function \(g(\X)\). In particular, \(g(\x^\star)=1\) iff
\(\x^\star\) is an MC-explanation of decision \(f(\x)\).

Suppose we want to compute
the MC-explanations of a positive decision \(f(\x)=1\). The algorithm will first find the portion \(\alpha\) of instance \(\x\) with variables
set to \(0\). It will then conjoin\footnote{Conjoining \(f\) with \(\alpha\) leads to a function \(h\) such that
\(h(\x)=1\) iff \(f(\x)=1\) and \(\x\) is compatible with \(\alpha\).} \(f\) with \(\alpha\) and \(1\)-minimize the result. 
The obtained decision function encodes the MC-explanations in this case. 

An OBDD can be complemented and conjoined with a variable instantiation in linear time. It can also be
minimized in linear time. This leads to the following complexity for generating MC-explanations based on OBDDs.

\begin{theorem}\label{theo:mc-explanation}
When the decision function \(f(\X)\) is represented as an OBDD, the time and space complexity of Algorithm~\ref{alg:find-mc} is linear in the size
of \(f\), while guaranteeing that the output function \(g(\X)\) is also an OBDD.
\end{theorem}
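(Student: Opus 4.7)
The plan is to walk through the four steps of Algorithm~\ref{alg:find-mc} one by one, verifying for each that (i) it runs in time linear in the size of the input OBDD and (ii) its output is still an OBDD over the same variable order. Correctness of the algorithm itself is immediate from Definition~\ref{def:mc-explanation} and the definition of cardinality minimization, so the whole content of the theorem is this complexity/closure claim.

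For Step~1, evaluating $f(\x)$ requires only following a single root-to-sink path in the OBDD, so it runs in time $O(n)$, which is $O(|f|)$. Step~2 is bookkeeping on the instance $\x$. For Step~3, complementing an OBDD amounts to swapping the two sinks, which is linear-time and trivially preserves OBDD-ness (node labels, variable order, and edge labels are unchanged). In Step~4, conjoining $f$ with the instantiation $\alpha$ is the standard OBDD restriction operation: for every node labeled $X_i$ with $X_i$ fixed by $\alpha$ to value $v$, we redirect the incoming edges to that node's $v$-child (and keep other nodes intact), optionally collapsing the obvious redundancies. This yields an OBDD over the same order and can be done in one bottom-up (or top-down) pass in $O(|f|)$ time.

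The only nontrivial step is the $i$-cardinality minimization of the resulting OBDD. I would handle it via a two-pass dynamic program on the OBDD. In the first pass, traverse the DAG bottom-up and compute at each node $u$ the value $m(u)$ = minimum number of $i$-valued literals on any path from $u$ to the $1$-sink (with $m = +\infty$ if no such path exists); this respects the DAG topological order and uses $O(1)$ work per edge, hence $O(|f|)$ total. In the second pass, traverse top-down from the root: at each reachable node labeled $X_j$ with two children $u_0$ and $u_1$, discard the $v$-edge if taking it does not realize $m$ at the current node (i.e., redirect that edge to the $0$-sink). The resulting graph has the same node set and variable order as $f$, so it is still an OBDD, and the construction is again $O(|f|)$. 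By the definition of $m$, the surviving $1$-paths are exactly those realizing the minimum count of $i$-valued literals, which is precisely $f^i$.

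The main obstacle I anticipate is simply being careful with the bookkeeping in the minimization step: making sure that when an edge is pruned it is rerouted to the $0$-sink rather than deleted (so OBDD well-formedness is preserved), that nodes labeled by variables not appearing on any surviving $1$-path are handled correctly, and that the variable order is not disturbed. Once this is set up, combining the four linear-time, OBDD-preserving steps yields the claimed bound: Algorithm~\ref{alg:find-mc} runs in $O(|f|)$ time and space and returns an OBDD $g(\X)$ whose models are exactly the MC-explanations of $f(\x)$.
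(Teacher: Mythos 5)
Your overall decomposition matches the paper's proof: complement by swapping the two sinks, conjoin with \(\alpha\) by local surgery on the OBDD, and perform \(i\)-minimization by a two-pass dynamic program (the paper invokes the minimization technique for DNNFs, which is exactly your bottom-up computation of \(m(u)\) followed by a top-down edge-pruning pass). However, your description of the conjoin step is not correct. Conjoining \(f\) with the term \(\alpha\) is \emph{not} the restriction operation: redirecting the incoming edges of a node labeled \(X_i\) to its \(v\)-child computes the cofactor of \(f\) on \(X_i=v\), a function that no longer depends on \(X_i\) and therefore still accepts instances that set \(X_i\) to the opposite value. The algorithm needs \(f \wedge \alpha\), whose models must all be compatible with \(\alpha\); the correct linear-time surgery (and the one the paper uses) is to redirect the \((1-v)\)-edge of every node labeled \(X_i\) to the \(0\)-sink. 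If you substitute the cofactor, the subsequent \(i\)-minimization minimizes over the wrong model set and the output \(g\) would accept instances that disagree with \(\x\) on the fixed features, so it would not encode the MC-explanations.

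There is a second gap in the minimization step. In a reduced OBDD an edge may skip variables, and such an edge represents all completions of the skipped variables, which have different \(i\)-counts. Your quantity \(m(u)\) implicitly assigns each skipped variable its non-\(i\) value, but your top-down pass only keeps or discards whole edges, so the surviving structure still admits the non-minimal completions. For example, with \(f = X_1 \vee X_2\), both outgoing edges of the root realize the minimum \(m=1\) and survive, yet the model \(11\) is retained even though the \(1\)-minimization should keep only \(\{10, 01\}\). Repairing this requires inserting nodes along surviving edges that force skipped variables to the non-\(i\) value (equivalently, working with a quasi-reduced or smoothed diagram), which also falsifies your claim that the output has the same node set as \(f\). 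The paper sidesteps this detail by deferring to the cited DNNF minimization procedure; as written, your version of the pass is incorrect on reduced OBDDs. Neither issue disturbs the linear-time flavor of the result, but both are correctness errors in the argument as stated.
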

Given OBDD properties, one can count MC-explanations in linear time, and enumerate each in linear time.\footnote{Minimization, 
conjoin, and model enumeration are all linear time operations on DNNFs, which is a superset of OBDDs \cite{darwicheJACM-DNNF,darwicheJAIR02}. 
Moreover,
\[
\mbox{OBDD} \subset \mbox{SDD} \subset \mbox{d-DNNF} \subset \mbox{DNNF}
\]
where we read \(\subset\) as ``is-a-subclass-of''. 
Hence, DNNFs, d-DNNFs and SDDs could have been used for supporting
MC-explanations, except that we would need a different algorithm for compiling classifiers.
Moreover, beyond OBDDs, only SDDs support complementation in linear time. Hence, efficiently
computing MC-explanations of negative decision requires that we efficiently complement the decision functions represented by DNNFs or d-DNNFs.
}  

\subsection{Case Study: Votes Classifier} \label{sec:case-study}

We now consider the Congressional Voting Records (\name{votes}) from the UCI machine learning repository \cite{BacheLichman2013}.  This dataset consists of 16 key votes by Congressmen of the U.S. House of Representatives.  The class label is the party of the Congressman (positive if Republican and negative if Democrat).
A naive Bayes classifier trained on this dataset obtains 91.0\% accuracy.  We compiled this classifier into an OBDD, which has a size of 630 nodes.

The following Congressman from the dataset voted on all 16 issues and was classified correctly as a Republican:
\begin{center}
(0 1 0 1 1 1 0 0 0 0 0 0 1 1 0 1)
\end{center}
This decision has five MC-explanations of cardinality 3, e.g.:
\begin{center}
(0 0 0 1 1 0 0 0 0 0 0 0 0 1 0 0)
\end{center}
The MC-explanation tells us that this Congressmen could have reversed four of their yes-votes, and the classifier would still predict that this Congressman was a Republican.

For a problem of this size, we can enumerate all instances of the classifier.  We computed the MC-explanations for each of the \(32,256\) positive instances, 
out of a possible number of \(2^{16} = 65,536\) instances.  Among these MC-explanations, the one that appeared the most frequently was the MC-explanation from the above example.
This explanation corresponded to yes-votes on three issues:
\name{physician\mbox{-}fee\mbox{-}freeze},
\name{el\mbox{-}salvador\mbox{-}aid}, and 
\name{crime}.
Further examination of the dataset revealed that these issues were the three with the fewest Republican no-votes.

\section{Prime Implicant Explanations} \label{sec:primes}

We now consider the second type of explanations, called
\emph{prime-implicant explanations} or PI-explanations for short.

Let \(\y\) and \(\z\) be instantiations of some features and call them {\em partial instances.}
We will write \(\y \supseteq \z\) to mean that \(\y\) extends \(\z\), that is, it includes \(\z\) but
may set some additional features.

\begin{definition}[PI-Explanation]\label{def:pi-explanation}
Let \(f(\X)\) be a given decision function.  A \underline{PI-explanation} of a decision \(f(\x)\) is a partial instance \(\z\)
such that
\begin{enumerate}
\item[(a)] \(\z \subseteq \x\),
\item[(b)] \(f(\x) = f(\x^\star)\) for every \(\x^\star \supseteq \z\), and
\item[(c)] no other partial instance \(\y \subset \z\) satisfies (a) and (b).
\end{enumerate}
\end{definition}
Intuitively, a PI-explanation of decision \(f(\x)\) is a minimal subset \(\z\) of instance \(\x\)
that makes features outside \(\z\) irrelevant to the decision. That is, we can toggle any feature that does
not appear in \(\z\) while maintaining the current decision. The number of features appearing in a PI-explanation
will be called the {\em length} of the explanation. As we shall see later, PI-explanations of the same decision may have
different lengths.

\begin{table}[tb]
\begin{center}
\small
\setlength{\tabcolsep}{3pt}
\begin{tabular}{cccc|c|c|c}
$W$ & $F$ & $E$ & $G$ & \(\pr(\eql(A,\posve) | \x)\) & \(f(\x)\) & PI-explanations\\\hline
\rule{0pt}{9pt}\negve & \negve & \negve & \negve & 0.0002 & \negve & ($\n(w)\n(f)$) ($\n(w)\n(e)$) ($\n(w)\n(g)$) ($\n(f)\n(g)$) ($\n(e)\n(g)$) \\
\negve & \negve & \negve & \posve & 0.0426 & \negve & ($\n(w)\n(f)$) ($\n(w)\n(e)$) \\
\negve & \negve & \posve & \negve & 0.0006 & \negve & ($\n(w)\n(f)$) ($\n(w)\n(g)$) ($\n(f)\n(g)$) \\
\negve & \negve & \posve & \posve & 0.1438 & \negve & ($\n(w)\n(f)$) \\
\negve & \posve & \negve & \negve & 0.0016 & \negve & ($\n(w)\n(e)$) ($\n(w)\n(g)$) ($\n(e)\n(g)$) \\
\negve & \posve & \negve & \posve & 0.2933 & \negve & ($\n(w)\n(e)$) \\
\negve & \posve & \posve & \negve & 0.0060 & \negve & ($\n(w)\n(g)$) \\
\negve & \posve & \posve & \posve & 0.6105 & \color{red}\textbf{\posve} & \color{red}\textbf{($feg$)} \\
\posve & \negve & \negve & \negve & 0.0354 & \negve & ($\n(f)\n(g)$) ($\n(e)\n(g)$) \\
\posve & \negve & \negve & \posve & 0.9057 & \color{red}\textbf{\posve} & \color{red}\textbf{($wg$)} \\
\posve & \negve & \posve & \negve & 0.1218 & \negve & ($\n(f)\n(g)$) \\
\posve & \negve & \posve & \posve & 0.9732 & \color{red}\textbf{\posve} & \color{red}\textbf{($wg$)} \\
\posve & \posve & \negve & \negve & 0.2552 & \negve & ($\n(e)\n(g)$) \\
\posve & \posve & \negve & \posve & 0.9890 & \color{red}\textbf{\posve} & \color{red}\textbf{($wg$)} \\
\posve & \posve & \posve & \negve & 0.5642 & \color{red}\textbf{\posve} & \color{red}\textbf{($wfe$)} \\
\posve & \posve & \posve & \posve & 0.9971 & \color{red}\textbf{\posve} & \color{red}\textbf{($wg$) ($wfe$) ($feg$)} \\
\end{tabular}
\end{center}
\caption{A decision function with PI-explanations. \label{tab:decision-fn-pi}}
\end{table}

Table~\ref{tab:decision-fn-pi}  depicts the decision function
\(f\) for the admissions classifier, with PI-explanations
for all 16 instances.  We write \((wg)\) for \(\eql(W,\posve), \eql(G,\posve)\)
and ($\n(e)\n(g)$) for \(\eql(E,\negve), \eql(G,\negve)\).

Consider a student (\posve\ \posve\ \negve\ \negve) who was \emph{not} admitted by this decision function. 
There is a single PI-explanation ($\n(e)\n(g)$) for this decision.  According to this explanation, it is sufficient to have a poor entrance exam and a poor GPA to be rejected---it does not matter whether they have work experience or if they are a first-time applicant.  
That is, we can set these features to any value, and the applicant would still be rejected.

Consider now a student (\posve\ \posve\ \posve\ \posve) who was admitted. 
There are three PI-explanations for this decision, ($wg$) ($wfe$) ($feg$), with different lengths.
These explanations can be visualized as
(\posve\ * * \posve), (\posve\ \posve\ \posve\ *) and (*\ \posve\ \posve\ \posve).
This is in contrast to the single MC-explanation (\posve\ \negve\ \negve\ \posve) obtained previously.

\subsection{Computing Prime Implicant Explanations}

\def\picover{\texttt{\mbox{pi-cover}}}
\def\piinst{\texttt{\mbox{pi-inst}}}

Algorithms exist for converting an OBDD for function \(f\) into an ODD that 
encodes the prime implicants of \(f\) \cite{CoudertMadre93,PrimeOverview93,minato1993fast}.\footnote{These algorithms compute prime-implicant {\em covers}.}
The resulting ODD has three values for each variable: \(0\), \(1\) and \(\ast\) (don't care). The ODD encodes partial instances, which correspond to the PI-explanations
of positive instances (to get the PI-explanations of negative instances, we complement the OBDD \(f\)).
These algorithms recurse on the structure of the input OBDD, computing prime implicants of sub-OBDDs. 
If \(X\) is the variable labeling the root of OBDD \(f\), then \(f_{\n(x)}\) denotes its \(0\)-child and \(f_x\) denotes its \(1\)-child.  
Algorithm~\ref{alg:pi-cover} computes prime implicants by recursively computing prime implicants for \(f_{\n(x)}\), \(f_x\) and \(f_{\n(x)} \wedge f_x\)  \cite{CoudertMadre93}.  

As we are interested in explaining a specific instance \(\x\), we only need the prime implicants compatible with \(\x\) (a  
function may have exponentially many prime implicants, but those compatible with an instance may be small).
We exploit this observation in Algorithm~\ref{alg:pi-inst}, which computes the PI-explanations of a given positive instance \(\x\) by
avoiding certain recursive calls.  
Empirically, we have observed that Algorithm~\ref{alg:pi-inst} can be twice as fast as Algorithm~\ref{alg:pi-cover} (computing PIs first, then conjoining with a given instance to obtain PI-explanations).
It can also generate ODDs that are an order-of-magnitude smaller.
The following table highlights this difference in size and running time, per instance, between Algorithms~\ref{alg:pi-cover}~(cover) \&~\ref{alg:pi-inst}~(inst).  Relative improvements are denoted by impr; \(n\) denotes the number of features.
We report averages over \(50\) instances.

\begin{algorithm}[tb]
\caption{\picover(\(f,\pi\))} \label{alg:pi-cover}

\Ainput{OBDD \(f\) and variable ordering \(\pi\)}

\Aoutput{ODD \(g\) encoding prime implicants of \(f\)}

\Amain{
\begin{algorithmic}[1]
\STATE \textbf{if} \(\pi\) is empty \textbf{return} \(f\)
\STATE remove first variable \(X\) from order \(\pi\)
\STATE \(g_\ast \leftarrow \picover(f_{\n(x)} \wedge f_{x},\pi)\)
\STATE \(g_{\n(x)} \leftarrow \picover(f_{\n(x)},\pi),\:\:\) \(g_{x} \leftarrow \picover(f_x,\pi)\)
\STATE \(g_{\n(x)} \leftarrow g_{\n(x)} \wedge \neg g_\ast,\:\:\) \(g_{x}  \leftarrow g_{x} \wedge \neg g_\ast\)
\RETURN ODD with branches \(g_{\n(x)}, g_{x}, g_\ast\)
\end{algorithmic}
}
\end{algorithm}

\begin{algorithm}[tb]
\caption{\piinst(\(f,\pi,\x\))} \label{alg:pi-inst}

\Ainput{OBDD \(f\), variable ordering \(\pi\), and instance \(\x\)}

\Aoutput{ODD \(g\) for primes implicant compatible with \(\x\)}

\Amain{
\begin{algorithmic}[1]
\STATE \textbf{if} \(\pi\) is empty \textbf{return} \(f\)
\STATE remove first variable \(X\) from order \(\pi\)
\STATE \(g_\ast \leftarrow \piinst(f_{\n(x)} \wedge f_x,\pi,\x)\)
\IF{\(\x\) sets \(X\) to \(\n(x)\)}
	\STATE \(g_{\n(x)} \leftarrow \piinst(f_{\n(x)},\pi,\x),\:\:\) \(g_x \leftarrow\bot\)
\ELSE
	\STATE \(g_{\n(x)} \leftarrow \bot,\:\:\) \(g_{x} \leftarrow \piinst(f_x,\pi,\x)\)
\ENDIF
\STATE \(g_{\n(x)} \leftarrow g_{\n(x)} \wedge \neg g_\ast,\:\:\) \(g_{x} \leftarrow g_{x} \wedge \neg g_\ast\)
\RETURN ODD with branches \(g_{\n(x)}, g_{x}, g_\ast\)
\end{algorithmic}
}
\end{algorithm}

\begin{center}
\setlength{\tabcolsep}{6pt}
\small
\begin{tabular}{rr|rrr|rrr}
& & \multicolumn{3}{|c|}{time (s)} & \multicolumn{3}{|c}{ODD size} \\ \hline
dataset & $n$ & cover & inst & impr & cover & inst & impr \\ \hline
\name{votes} & 16 & 0.04 & 0.02 & 1.99 & 2,144 & 139 & 15.42 \\
\name{spect} & 22 & 0.06 & 0.02 & 2.27 & 3,130 & 437 & 7.14 \\
\name{msnbc} & 16 & 0.07 & 0.02 & 2.56 & 5,086 & 446 & 11.39\\
\name{nltcs} & 15 & 0.03 & 0.02 & 1.39 & 432 & 111 & 3.89
\end{tabular}
\end{center}

\subsection{Case Study: Votes Classifier}

Consider again the voting record of the Republican Congressman that we considered earlier in Section~\ref{sec:case-study}:
\begin{center}
(0 1 0 1 1 1 0 0 0 0 0 0 1 1 0 1)
\end{center}
There are 30 PI-explanations of this decision.  There are 2 shortest explanations of \(9\) features:
\begin{center}
(\hphantom{0} \hphantom{0} 0 1 1 \hphantom{0} 0 0 0 \hphantom{0} \hphantom{0} \hphantom{0} 1 1 0 \hphantom{0})\\
(\hphantom{0} \hphantom{0} 0 1 1 1 \hphantom{0} 0 0 \hphantom{0} \hphantom{0} \hphantom{0} 1 1 0 \hphantom{0})
\end{center}
The first corresponds to yes votes on:
\begin{center}
\name{physician\mbox{-}fee\mbox{-}freeze},
\name{el\mbox{-}salvador\mbox{-}aid}, 
\name{superfund\mbox{-}right\mbox{-}to\mbox{-}sue},
\name{crime},
\end{center}
and no votes on
\begin{center}
\name{adoption\mbox{-}of\mbox{-}the\mbox{-}budget\mbox{-}resolution},
\name{anti\mbox{-}satellite\mbox{-}test\mbox{-}ban},
\name{aid\mbox{-}to\mbox{-}nicaraguan\mbox{-}contras},
\name{mx\mbox{-}missile},
\name{duty\mbox{-}free\mbox{-}exports}.
\end{center}
These 9 votes necessitate the classification of a Republican; no other vote changes this decision.
Finally, there are \(506\) PI-explanations for all decisions made by this classifier:
\begin{center}
\setlength{\tabcolsep}{5pt}
\begin{tabular}{c|ccccc|c}
length of explanation & 9 & 10 & 11 & 12 & 13 & total \\\hline
number of explanations & 35 & 308 & 143 & 19 & 1 & 506
\end{tabular}
\end{center}

\section{More On Monotone Classifiers} \label{sec:more-monotone}

We now discuss a specific relationship between MC and PI explanations for monotone classifiers.

An MC-explanation sets all features, while a PI-explanation sets only a subset of the features. 
For a positive instance, we will say that MC-explanation \(\x\) and PI-explanation \(\z\) {\em match} 
iff \(\x\) can be obtained from \(\z\) by setting all missing features negatively. 
For a negative instance, MC-explanation \(\x\) and PI-explanation \(\z\) match 
iff \(\x\) can be obtained from \(\z\) by setting all missing features positively. 

\begin{theorem}\label{theorem:monotone}
For a decision \(f(\x)\) of a monotone decision function \(f\):
\begin{enumerate}
\item Each MC-explanation matches some shortest PI-explanation.
\item Each shortest PI-explanation matches some MC-explanation.
\end{enumerate}
\end{theorem}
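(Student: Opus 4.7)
The plan is to handle positive instances in detail; the negative-instance case follows by the standard monotone duality that swaps the roles of 0 and 1. The whole argument hinges on one structural fact about monotone classifiers, which I establish first.

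\textbf{Key lemma.} For a monotone $f$ and a positive decision $f(\x)=1$, every PI-explanation $\z$ assigns each feature it mentions to 1. If some $X\in\z$ were set to 0, then condition (c) of Definition~\ref{def:pi-explanation} would give a negative extension $\x'$ of $\z\setminus\{X\}$. If $\x'$ sets $X=0$, then $\x'$ is actually an extension of $\z$ and must be positive, a contradiction; if $\x'$ sets $X=1$, then flipping $X$ to $0$ gives an extension of $\z$ that is positive, while $\x'$ dominates it in $\subseteq^1$, contradicting monotonicity.

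\textbf{Part 1.} Given an MC-explanation $\x^\star$, take $\z$ to be the partial instance consisting of exactly the 1-features of $\x^\star$. Every extension of $\z$ contains the 1-features of $\x^\star$, so monotonicity forces it to be positive; conversely, dropping any feature from $\z$ and completing with 0's would yield a positive instance $\x' <^1 \x^\star$ with $\x' \subseteq^1 \x$, contradicting the MC-minimality of $\x^\star$. Hence $\z$ is a PI-explanation matching $\x^\star$, whose length equals the common 1-count $k$ of all MC-explanations of $\x$.

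\textbf{Part 2.} Given a shortest PI-explanation $\z$, the key lemma forces $\z$ to set only 1's; its 0-completion $\x^\star$ extends $\z$, hence is positive, and its 1-features are a subset of those of $\x$. If some positive $\x'\subseteq^1 \x$ had strictly fewer 1-features than $\x^\star$, then the partial instance formed by its 1-features would already make every extension positive by monotonicity; shrinking that partial instance to a minimal subset produces a PI-explanation shorter than $\z$, a contradiction. Thus $\x^\star$ is an MC-explanation matching $\z$, and combining Parts 1 and 2 shows that every shortest PI-explanation has length exactly $k$, so the $\z$ constructed in Part 1 is in fact shortest.

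\textbf{Main obstacle.} The key lemma is the crux: without it, a PI-explanation could mix polarities and the ``match'' relation would not even be well-defined against an MC-explanation, whose completion rule is polarity-fixed. Once monotonicity forces PI-explanations to a single polarity, the minimality exchanges in Parts 1 and 2 reduce to short applications of monotonicity together with the definitions of MC- and PI-minimality.
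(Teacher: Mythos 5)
Your proof is correct and follows essentially the same approach as the paper's: establish that PI-explanations of a positive instance of a monotone function contain only positive literals, then perform the two minimality exchanges between MC- and PI-explanations. The only (harmless) differences are that your key lemma is stated for all PI-explanations rather than just shortest ones, and that you certify the PI-explanation built in Part~1 as shortest by deferring to the length computation in Part~2, whereas the paper argues it directly by contradiction via a hypothetical shorter PI-explanation.
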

Hence, for monotone decision functions, MC-explanations coincide with shortest PI-explanations.

The admissions classifier we considered earlier is monotone, which can be verified by inspecting its decision function (in contrast, the votes classifier is not monotone).
Here, all MC-explanations matched PI-explanations. 
For example, the MC-explanation (\posve\ \negve\ \negve\ \posve) for instance (\posve\ \posve\ \negve\ \posve) matches the PI-explanation \((wg)\).
However, the PI-explanation \((wfe)\) for instance 
(\posve\ \posve\ \posve\ \posve) does not match the single MC-explanation (\posve\ \negve\ \negve\ \posve). One can
verify though, by examining Tables~\ref{tab:decision-fn} and~\ref{tab:decision-fn-pi}, that shortest PI-explanations coincide with MC-explanations.

MC-explanations are no longer than PI-explanations and their count is no larger than the count of PI-explanations. 
Moreover, MC-explanations can be computed in linear time, given that the decision function is represented as an OBDD.
This is not guaranteed for PI-explanations.  

PI-explanations can be directly extended to classifiers with multi-valued features. They are also meaningful for arbitrary classifiers, not just monotone ones.
While our definition of MC-explanations was directed towards monotone classifiers with binary features, it can be generalized so it remains useful for arbitrary
classifiers with multi-valued features. In particular, let us partition the {\em values} of each feature into two sets: on-values and off-values. Let us also 
partition the set of {\em features} \(\X\) into \(\Y\) and \(\Z\). Consider now the following question about a decision \(f(\x)\), where \(\x = \y\z\). Keeping \(\y\) fixed, find
a culprit of on-features in \(\z\) that maintains the current decision. Definition~\ref{def:mc-explanation} is a special case of this more general definition,
and Algorithm~\ref{alg:find-mc} can be easily extended to compute these more general MC-explanations using the same complexity (that is, linear in the size of ODD for the
decision function).

\section{Related Work} \label{sec:related}

There has been significant interest recently in providing explanations for classifiers; see, e.g., \cite{lime:kdd16,ElenbergDFK17,LundbergL17,anchor:nipsws16,anchors:aaai18}.  
In particular, \emph{model-agnostic} explainers were sought \cite{lime:kdd16}, which can explain the behavior of (most) any classifier, by treating it as a \emph{black box}.  Take for example, LIME, which \emph{locally} explains the classification of a given instance.  Roughly, LIME samples new instances that are ``close'' to a given instance, and then learns a simpler, interpretable model from the sampled data.  For example, suppose a classifier rejects a loan to an applicant; one could learn a decision tree for other instances similar to the applicant, to understand why the original decision was made.

More related to our work is the notion of an ``anchor'' introduced in \cite{anchor:nipsws16,anchors:aaai18}.  
An anchor for an instance is a subset of the instance that is highly likely to be classified with the same label, no matter how the missing features are filled in (according to some distribution).  An anchor can be viewed as a probabilistic extension of a PI-explanation.  Anchors can also be understood using the Same-Decision Probability (SDP) \cite{ChoiXueDarwiche11,ChenJAIR14,ChoiIJCAI17}, proposed in \cite{DarwicheChoi10}.  In this context, the SDP asks, ``Given that I have already observed \(\x\), what is the probability that I will make the same classification if I observe the remaining features?''  In this case, we expect an anchor \(\x\) 
to have a high SDP, but a PI-explanation \(\x\) will always have an SDP of 1.0.

\section{Conclusion} \label{sec:conclusion}

We proposed an algorithm for compiling latent-tree Bayesian network
classifiers into decision functions in the form of ODDs. We also
proposed two approaches for explaining the decision that a Bayesian
network classifier makes on a given instance, which apply more
generally to any decision function in symbolic form. One approach is
based on MC-explanations, which minimize the number of positive
features in an instance, while maintaining its classification.  The
other approach is based on PI-explanations, which identify a smallest
set of features in an instance that renders the remaining features
irrelevant to a classification.  We proposed algorithms for computing
these explanations when the decision function has a symbolic and
tractable form. We also discussed monotone classifiers and showed that
MC-explanations and PI-explanations coincide for this class of
classifiers.

\section*{Acknowledgments}

This work has been partially supported by NSF grant \#IIS-1514253,
ONR grant \#N00014-15-1-2339 and DARPA XAI grant \#N66001-17-2-4032.

\appendix

\section{Proofs}

\begin{proof}[Proof of Theorem~\ref{theorem:tree}]
Our proof is based on analyzing Algorithm \ref{alg:compile-lt} on an arbitrary latent-tree classifier with \(n\) variables and \(b\) values, and bounding the size of the decision graph \(D\) after each call to \(\extendtree\). For any iteration of the while-loop, let \(D\) be the initial decision graph and let \(D'\) be the decision graph generated after the expanding phase of \(\extendtree(.,D,.)\). Furthermore, let \(S(D)\) denote the number of leaf nodes of \(D\) (similarly for \(D'\)). We will show the following loop invariant: \(S(D') \leq b^{\frac{3n}{4}}\). For any iteration, \(S(D)\) is bounded by \(\min(b^i,b^{n-i})\), where \(i\) denotes the depth of \(D\).
There are \(n-i\) variables remaining, and the choice of \(C\) in the algorithm guarantees that the number of variables under \(C\) is at most \(\frac{n-i}{2}\). Thus, \(S(D')\) is bounded by \(b^{\frac{n-i}{2}}S(D) = \min(b^{\frac{n+i}{2}},b^{\frac{3(n-i)}{2}})\). If \(i \leq \frac{n}{2}\), then \(S(D') \leq b^{\frac{n+n/2}{2}} = b^\frac{3n}{4}\). Otherwise if \(i > \frac{n}{2}\) then \(S(D') \leq b^\frac{3(n-n/2)}{2} = b^\frac{3n}{4}\).
Thus, after every call to \(\extendtree\), the decision graph \(D'\) has at most \(b^\frac{3n}{4}\) leaf nodes and the merging phase cannot increase the number of nodes, giving us a total size bound of \(O(nb^\frac{3n}{4})\). To obtain the size bound of \(O(b^\frac{3n}{4})\), observe that \(S(D')\) is at least half of the number of newly expanded nodes for each call, and at most one such call can have \(S(D') > b^\frac{2n}{3}\) nodes. Finally, merging a node in \(D'\) takes time logarithmic in the size of \(D'\), so the time complexity is \(O(nb^\frac{3n}{4})\).
\end{proof}

\begin{proof}[Proof of Theorem~\ref{theorem:hardness}]
Our proof is based on \cite{ChenJAIR14}, which showed that computing the same-decision probability (SDP) is NP-hard in naive Bayes networks.  Say we have an instance of the number partitioning problem, where we have positive integers \(a_1,\ldots,a_n\) and we ask if there exists a set \(I \subseteq \{1,\ldots,n\}\) such that \(\sum_{i \in I} a_i = \sum_{i \notin I} a_i\).  Suppose we have a naive Bayes classifier with features \(X_i\) where:
\[
\log \frac{\pr(x_i \mid c)}{\pr(x_i \mid \n(c))} = a_i
\mbox{\quad and \quad}
\log \frac{\pr(\n(x)_i \mid c)}{\pr(\n(x)_i \mid \n(c))} = -a_i
\]
and where we have a uniform prior \(\pr(C)\).  Let \(\x_I\) be the instance where \(X_i\) is set to true if \(i \in I\) and \(X_i\) is set to false if \(i \notin I\) .
Consider the log-odds \(\log O(c \mid \x_I) = \log \frac{\pr(c \mid \x_I)}{\pr(\n(c) \mid \x_I)}\):
\begin{align*}
\log O(c \mid \x_I) 
& = \sum_{i \in I} \log \frac{\pr(x_i \mid c)}{\pr(x_i \mid \n(c))}
  + \sum_{i \notin I} \log \frac{\pr(\n(x)_i \mid c)}{\pr(\n(x)_i \mid \n(c))} \\
& = \left( \sum_{i \in I} a_i \right) - \left( \sum_{i \notin I} a_i \right)
\end{align*}
If \(I\) is a number partitioning solution, then \(\log O(c \mid \x_I) = 0.\)  Otherwise \(\log O(c \mid \x_I) = - \log O(c \mid \x_J) \ne 0\) where \(J = \{1,\ldots,n\} \setminus I\).  Hence, if there is no solution \(I\), then half of the instances \(\x\) have log-odds strictly greater than zero, and the other half have log-odds strictly less than zero.  Thus, there exists a solution iff the number of positive instances in the decision function of \(N\) is strictly less than \(\frac{1}{2} \cdot 2^n\) given a (strict) threshold of \(\frac{1}{2}\).
Finally, if we can compile the decision function of \(N\) to an OBDD in polytime, then we can perform model counting in time linear in the size of the OBDD, and hence solve number partitioning, which is NP-complete.  Thus, compiling the decision function is NP-hard.
\end{proof}

\begin{proof}[Proof of Theorem~\ref{theo:mc-explanation}]
An OBDD \(f\) can be complemented by simply switching its \(0\)-sink and \(1\)-sink.
Since \(\alpha\) is a conjunction of literals, we can conjoin \(f\) with \(\alpha\) by manipulating the OBDD structure directly: if \(X\) appears in \(\alpha\) positively (negatively), we redirect the 0-edge (1-edge) of each OBDD node labeled by \(X\) to the \(0\)-sink.  Clearly, this operation takes time linear in the size of \(f\).  The operation of \(i\)-minimization can also be performed in time linear in the size of \(f\) using the technique given in \cite{darwicheJACM-DNNF} for DNNFs. The minimization procedure
performs two passes. The first pass performs an addition or minimization at each node. The second pass redirects some edges depending on simple tests.
\end{proof}

\begin{proofendline}[Proof of Theorem~\ref{theorem:monotone}]
Suppose, without loss of generality, that we are explaining a positive instance \(\x^\star\) of a monotone decision function \(f\) (the negative case is symmetric).
The proof uses the following observation: A shortest PI-explanation \(\z\) must have all its features set positively (otherwise, due to monotonicity, we can just drop the negative features in \(\z\) to obtain a shorter PI-explanation).
\begin{enumerate}
\item Suppose that \(\x\) is an MC-explanation. Let \(\z\) be the portion of \(\x\) containing all features that are set positively. 
Due to monotonicity, we can toggle features of \(\x\) that are outside \(\Z\) without changing the decision.
Moreover, no subset of \(\z\) will have this property; otherwise, \(\x\) cannot be an MC-explanation.
Hence, \(\z\) is a PI-explanation that matches \(\x\). 
Suppose now that \(\z\) is not a shortest PI-explanation and let  \(\z^\prime\) be a shortest PI-explanation.  Then we can augment \(\z^\prime\) by setting all missing features negatively, giving us a positive instance with a 1-cardinality less than that of \(\x\).  Hence, \(\x\) cannot be an MC-explanation.

\item Suppose that \(\z\) is a shortest PI-explanation. Then all features in \(\z\) must be set positively.
Now let \(\x\) be the result of augmenting \(\z\) by setting all missing features negatively. Then \(\x\) is a positive instance since \(\z\) is a PI-explanation. 
Suppose now that \(\x\) is not an MC-explanation, and let \(\x^\prime\) be an MC-explanation.  Then let \(\z^\prime\) be the portion of \(\x^\prime\) containing all features that are set positively.  By monotonicity, \(\z\) cannot be a shortest PI-explanation since \(\z^\prime\) is shorter than \(\z\) yet all of its completions would be positive instances.
\end{enumerate}
\end{proofendline}

{\small
\bibliographystyle{named}
\bibliography{bib/references}
}

\end{document}